\documentclass{article}

 \usepackage[main, final]{neurips_2026}

\usepackage[utf8]{inputenc} 
\usepackage[T1]{fontenc}    
\usepackage{hyperref}       
\usepackage{url}            
\usepackage{booktabs}       
\usepackage{amsfonts}       
\usepackage{amsmath}        
\usepackage{amsthm}         
\usepackage{algorithm}
\usepackage{mathtools}      
\usepackage{algorithmic}    
\usepackage{nicefrac}       
\usepackage{microtype}      
\usepackage{xcolor}         
\usepackage[capitalise,noabbrev]{cleveref} 
\usepackage{multirow}
\usepackage{titletoc}
\usepackage{setspace}

\newcommand{\R}{\mathbb{R}}
\newcommand{\E}{\mathbb{E}}
\newtheorem{definition}{Definition}
\newtheorem{theorem}{Theorem}

\newtheorem{corollary}[theorem]{Corollary}

\newcommand{\toreview}[1]{{\color{black}#1}}

\title{Recursive Semantic Divergence for LLM Agent Consistency}

\author{
Harshavardhan Abichandani \\
SAP SE \\
\texttt{harshavardhan.abichandani@sap.com}
\And
Penny Chong \\
SAP SE \\
\texttt{penny.chong@sap.com}
\And
Atin Ghosh \\
SAP SE \\
\texttt{atin.ghosh@sap.com}
\And
Daniel Dahlmeier \\
SAP SE \\
\texttt{d.dahlmeier@sap.com}
}

\begin{document}

\maketitle

\begin{abstract}
  Large Language Model (LLM) agents with tool use exhibit inconsistent behavior across independent runs from identical states, calling different tools, producing contradictory outputs, or pursuing divergent strategies. This variability compounds over time and undermines deployment reliability. \toreview{Existing evaluation metrics primarily measure task correctness via completion-based scores and do not capture consistency in agent behavior across executions. Separately, agent behavior is analyzed either at the} token level using probability distributions or at the single-response level using semantic clustering, but neither captures how inconsistency propagates across future turns of a trajectory. We introduce Recursive Semantic Divergence (RSD), an unsupervised trajectory-level consistency metric adapted from bisimulation metrics in reinforcement learning. RSD measures the expected cumulative semantic disagreement between two independent rollouts of the same agent from the same state, using natural language inference to detect logical inconsistency. It is defined recursively over future states and approximated by a neural network trained on independent rollouts. The learned metric estimates future divergence from the current state, providing a consistency score at each turn without requiring the full trajectory to complete. When used as a reward signal for fine-tuning, RSD reduces the variation in task progress from $132\%$ to $91\%$ on $\tau^2$-bench and from $90\%$ to $44\%$ on toolsandbox, while improving performance across benchmarks and model scales. \toreview{Training with RSD as an additional reward signal matches supervised fine-tuning performance with substantially lower variance, without requiring ground-truth labels\footnote{Source code is publicly available in the repository \href{https://github.com/AgentConsistency/RSD-agent-consistency}{https://github.com/AgentConsistency/RSD-agent-consistency.}}.}
\end{abstract}

\section{Introduction}

\toreview{With recent advances, Large language models (LLMs) agents can execute complex workflows and multi-step tasks via tool calling functions \citep{Yao2023ReAct,jang2025dice,liu2023agentbench}. 
These agentic systems are increasingly deployed as autonomous agents that interact with users and tools over multiple turns. 
For long-horizon tasks such as code generation agents \citep{shinn2023reflexion,yang2024sweagent}, as well as agents built for critical applications such as healthcare systems \citep{tu2025towards,kim2024mdagents}, both correctness and reliability are important but manifest in different ways. In long-
horizon tasks, like code generation, small errors in the beginning can accumulate over time, leading to compounding errors that deviate significantly from the expected solution trajectory. 
In critical applications such as healthcare systems, reliability is further challenged by the need for consistent agent behavior in repeated executions. However, in reality, we observe that given the same task to solve, the agent may still yield different decision paths across executions, such as calling entirely different tools, provide
contradictory information to users, or follow divergent strategies that lead to different outcomes.

While LLM decoding strategies such as setting the temperature to zero, or using greedy decoding reduce stochasticity at the token level, they do not necessarily ensure consistency at the agent-level. 
For a multi-step or multi-turn LLM agent, the behavior is determined by sequences of decisions conditioned on evolving trajectories, where small differences in the trajectories due to stochastic tool output responses or environmental feedback can lead to divergent agent trajectories even under deterministic decoding strategies. Additionally, even when the underlying LLM is the same, the decoding strategy is deterministic, and the agent interaction history is fixed,  there are additional sources of non-determinism such as floating-point precision and hardware-level variability which can lead to inconsistent agent behavior.

This highlights that the agent behaviour, which is trajectory-dependent,  may vary across executions under identical initial conditions.  Existing evaluation metrics \citep{chang2024agentboard,yao2024tau,lu2024toolsandbox} focus primarily on reporting task correctness using full or partial task completion metrics, often do not capture consistency in the agent performance. The $pass$\textasciicircum{}$k$ metric \citep{yao2024tau,barres2025tau}, which measures the chance that all $k$ i.i.d. task trials are successful, is the closest existing metric to capturing agent consistency, but it remains binary and is used for reporting rather than 
enforcing consistency in the agent behavior. Another line of work considers either token-level distributions \citep{malinin2020uncertainty} or single-response semantic similarity \citep{kuhn2023semantic}, but these approaches do not capture how inconsistencies propagate across future turns in a trajectory. Notably, agent consistency is independent of correctness, as agents may exhibit stable behavior patterns that are either consistently successful or consistently failing across executions.

Motivated by the need for a reliable and consistent agent behavior in long-horizon tasks and critical applications, and the  trajectory-dependent variability of LLM agents that is not captured by existing evaluation metrics, we propose a trajectory-level consistency metric to enforce agent consistency conditioned on identical interaction-history states.
We introduce Recursive Semantic Divergence (RSD), an unsupervised trajectory-level consistency metric adapted from bisimulation metrics \citep{Ferns2004Metrics} in reinforcement learning, which define state similarity in terms of behavioral equivalence. We extend this framework to interaction-history states and replace reward differences with a semantic discrepancy based on Natural Language Inference (NLI) \citep{Bowman2015Large}, adapted to LLM agent's actions. 
We then define a recursive bisimulation-style objective over this semantic distance, yielding a fixed-point formulation of behavioral consistency, which is approximated using the Matching under Independent Coupling (MICo) method \citep{Castro2021MICo}. We show that training with RSD metric as a reward signal improves agent consistency. 

\begin{figure}[t]
    \centering
    \includegraphics[width=\textwidth]{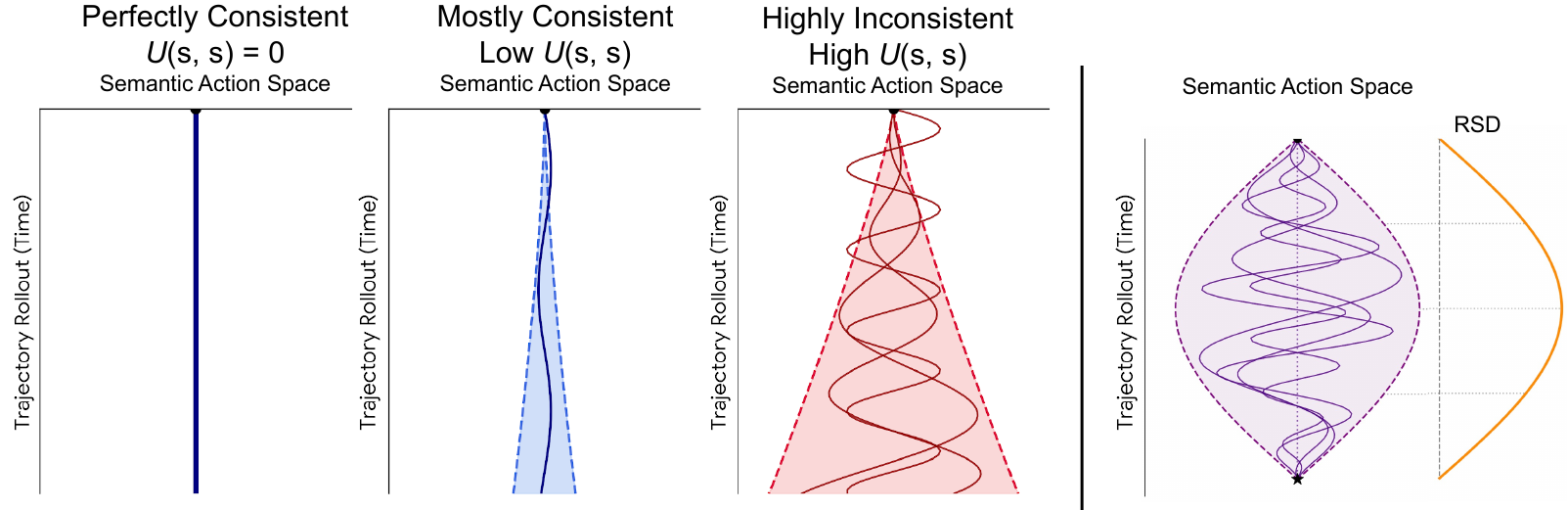}
    \caption{Diagram illustrating Recursive Semantic Divergence (RSD) in LLM agents. The first three panels display trajectory spreads for perfectly consistent, mostly consistent, and highly inconsistent agents over time. The final panel maps an agent's expanding and converging semantic action space alongside its corresponding RSD score, demonstrating how the metric tracks the variance of semantic divergence at each turn. }
    \label{fig:concept_diagram}
\end{figure}

We summarize our contribution as follows:
\begin{enumerate}
\item We introduce Recursive Semantic Divergence (RSD), an unsupervised trajectory-level consistency metric that adapts bisimulation from reinforcement learning to evaluate LLM agents.
\item We empirically demonstrate that using the learned RSD metric as a reward signal for fine-tuning significantly reduces behavioral variation and improves consistency across benchmarks and model scales.
\item We provide a theoretical analysis showing that the RSD fixed-point formulation converges and yields meaningful guarantees about agent behavior. In particular, we derive bounds on the probability of total trajectory disagreement and prove the existence of a dominant action class when the agent exhibits low RSD.
\end{enumerate}


} 

\section{Related Work}

\toreview{\textbf{Agent evaluation.} Existing agent evaluation methods primarily focus on task correctness or completion rates. \citet{wang2023mint,xie2024osworld} evaluate coarse-grained agent performance using success-based metrics, while \citet{chang2024agentboard,chong2026talk} report fine-grained metrics that capture intermediate progress and turn-level efficiency. Representing the tool call dependencies of an agent with a directed acyclic graph (DAG) \citep{lu2024toolsandbox}, is also used to evaluate agent performance but it does not assess agent consistency across multiple executions. Although some prior works partially account for agent stochasticity when reporting final performance metrics, they do not explicitly measure consistency in agent behavior or propose methods to enforce consistent execution of agent trajectories. For instance, \citet{yao2024tau,barres2025tau}, measure agent reliability with the $pass$\textasciicircum{}$k$ metric which quantifies the chance that all $k$ i.i.d. task trials are successful out of the $n$ total trials, but the metric remains limited for binary task outcomes only and not for evaluating the consistency of intermediate progress. Similarly, \citet{chong2026talk}, analyse the agent consistency across the different executions by visualising expected agent progress and variance in a plot, but  does not provide method to improve agent consistency, which is the primary goal of our work.
}

\toreview{
\textbf{Uncertainty estimation.} Quantifying uncertainty in LLM outputs has been approached from multiple angles. \citet{malinin2020uncertainty} measure token-level entropy, but such methods require access to model logits or weights and fail to account for the semantic content of the output. \citet{kuhn2023semantic} introduced semantic entropy, in which sampled responses are semantically clustered before entropy is computed. While this addresses the limitations of token-level measures, it operates at the level of single-query responses and does not capture how disagreement at one turn propagates across a multi-turn trajectory.

For multi-step agents specifically, \citet{Kirchhof2025Position} argue that uncertainty quantification methods require fundamental reassessment due to underspecification and environmental effects introduced during agent interaction. \citet{Duan2025UProp} propose an information-theoretic framework that decomposes uncertainty propagation across agent steps into intrinsic and extrinsic components. However, their approach is primarily diagnostic, requires the agent to complete the full trajectory before analysis, and does not produce a single scalar score that can be directly used as a reward signal. In contrast, RSD is a lookahead, unsupervised metric that captures how semantic inconsistency propagates across future agent turns.
}

\section{Preliminaries}

\subsection{Bisimulation Metrics}
\label{sec:bisim}

Bisimulation metrics \citep{Larsen1991BisimulationTP, Ferns2004Metrics} were introduced in reinforcement learning to measure the behavioral distance between MDP states. Two states are bisimilar if and only if they receive the same immediate reward and they transition to states that are themselves bisimilar. This recursive definition yields a fixed-point equation. Following \citet{Ferns2004Metrics}, for an MDP with reward $R$, transition kernel $P$, and discount factor $\gamma$, the bisimulation distance $d^*$ is the unique fixed point of the operator $\mathcal{T}$ acting on the space of all state-pair distance functions $\mathcal{D}$. We define $\mathcal{T}: \mathcal{D} \to \mathcal{D}$, $d \mapsto \mathcal{T}d$, with pointwise form
\toreview{
\begin{equation}
    (\mathcal{T}d)(s^{(i)}, s^{(j)}) = |R(s^{(i)}) - R(s^{(j)})| + \gamma \cdot W(d)\big(P(\cdot \mid s^{(i)}, \pi),\; P(\cdot \mid s^{(j)}, \pi)\big),
\label{eq:bisim}
\end{equation}
}
and the bisimulation distance is given by the fixed point $d^* = \mathcal{T}d^*$, where $W(d)(\cdot, \cdot)$ denotes the Kantorovich distance between the two successor-state distributions \toreview{$P(\cdot \mid s^{(i)}, \pi)$} and \toreview{$P(\cdot \mid s^{(j)}, \pi)$}, under the common policy $\pi$, using the ground metric $d$. States that are close under $d^*$ produce similar reward sequences and similar future trajectories, regardless of how different they appear in observation space.

\subsection{MICo: Sampling-Based Bisimulation}
\label{sec:mico}

Computing the exact fixed point of \Cref{eq:bisim} requires evaluating the \toreview{Kantorovich distance over successor state distributions, which is intractable when these distributions are accessible only via samples.} The Matching under Independent Coupling (MICo) method \citep{Castro2021MICo} addresses this by replacing the intractable computation with sample-based coupling: rather than computing 
\toreview{$W(d)(P(\cdot|s^{(i)},\pi ), P(\cdot|s^{(j)}, \pi))$ }over the full distribution, it pairs individual successor samples \toreview{$(s'^{(i)}, s'^{(j)})$ } drawn independently from their respective transition distributions. The key insight from \citet{Castro2021MICo} is that training on independently coupled samples still converges to a distance function that reflects behavioral similarity.

MICo parameterizes the distance through a learned embedding $\phi_\theta: \R^d \to \R^m$ that maps state representations to a latent space. The distance between two states combines a norm component (capturing individual state ``spread'') and an angular component (capturing relative orientation):
\toreview{
\begin{equation}
    \hat{U}_\theta(s^{(i)}, s^{(j)}) = \alpha\big(\|\phi_\theta(z^{s^{(i)}})\|^2 + \|\phi_\theta(z^{s^{(j)}})\|^2\big) + \beta \cdot d_{\angle}\big(\phi_\theta(z^{s^{(i)}}),\; \phi_\theta(z^{s^{(j)}})\big),
\label{eq:mico_dist}
\end{equation}
}
where $z^s  \toreview{\in \R^d}$ is a pretrained text embedding of state $s$, $d_{\angle}$ is the angular distance (normalized arccos of cosine similarity), and $\alpha, \beta > 0$ are fixed coefficients.



\section{Methodology}

\subsection{Problem Formulation}
\label{sec:pomdp}

We follow existing methods \citep{Yao2023ReAct, yao2024tau, huang2025crmarena} of formulating the LLM agent environment as a Partially Observable Markov Decision Process (POMDP) \citep{Kaelbling1998Planning}:
\begin{equation}
  \mathcal{M} = \langle \mathcal{S}, \mathcal{A}, \mathcal{O}, T, R \rangle, 
\end{equation}
\toreview{where
$\mathcal{S}$ is the set of true environment states such as database contents and user preferences, 
$\mathcal{A}$ is the agent's action space which includes the agent responses and tool calls, and
$\mathcal{O}$ is the observation space which includes all feedback the agent receives from the environment such as user utterance and tool output. The transition function $T: \mathcal{S} \times \mathcal{A} \to \mathcal{S}$ governs how the environment changes after the agent takes an action. The reward function $R: \mathcal{S} \times \mathcal{A} \to \mathbb{R}$ defines the reward the agent receives for executing a particular action in that state. This reward may be difficult to obtain without ground truth or human feedback.

}
\toreview{In a standard setting, LLM agents never observe  state $s \in \mathcal{S}$ directly, due to partial observability, context limitations, and potential access restrictions.
Instead, the agent conditions on the interaction history, which includes the complete sequence of observations received \toreview{$o_0, o_1, ... \in \mathcal{O}$} and actions taken \toreview{$a_0, a_1, ... \in \mathcal{A}$}:
}
\begin{equation}
  h_t = (o_0, a_0, o_1, a_1, \ldots, o_{t-1}, a_{t-1}, o_{t}).
\end{equation}

We reduce this POMDP to an MDP over belief states. For LLM agents the model's policy is already a function of the conversation history. Therefore, the history $h_t$ serves as the belief state, and we define the induced MDP $\mathcal{M}' = \langle \mathcal{H}, \mathcal{A}, \toreview{T'} \rangle$ where $\mathcal{H}$ is the set of all reachable histories and the transitions \toreview{$T'$} are deterministic concatenation of the observations: $h_{t+1} = h_t \oplus a_t \oplus o_{t+1}$. We denote a state in this induced MDP as \toreview{$s_t \equiv h_t$}.

The underlying LLM induces a stochastic policy $\pi(a \mid s)$ over this MDP, which results in a distribution over future trajectories $\tau \sim \Pi(\cdot \mid s)$. This means that two independent samples from this distribution may produce semantically different sequences. We refer to this semantic spread of sampled trajectories from the same state as inconsistency. Lower spread indicates that the policy is more deterministic.

\subsection{Recursive Semantic Divergence}
\label{sec:rsd}

The bisimulation framework (\Cref{sec:bisim}) supplies a recursive, discounted measure of behavioral similarity that captures how present differences propagate into the future. This notion maps directly onto agent consistency: if two independent runs from the same state yield similar behavioral futures, the agent is consistent; if they yield divergent futures, it is not.

\toreview{In the absence of human feedback or ground truth labels for LLM agents, we replace the immediate reward difference in a standard MDP, $|R{(s^{(i)})} - R(s^{(j)})|$ from \Cref{eq:bisim} with a Natural Language Inference (NLI)-based distance function. We adopt 
NLI \citep{Bowman2015Large, Williams2018Broad}, which classifies sentence pairs as entailment, contradiction, or neutral and use it as an unsupervised measure of semantic agreement between agent actions, where bidirectional entailment is interpreted as equivalence, contradiction as disagreement, and neutrality as partial overlap.
We define the following NLI distance function:}

\begin{definition}[NLI Distance Function]
\label{def:nli}
For agent actions $\toreview{a^{(i)}, a^{(j)}} \in \mathcal{A}$:
\begin{equation}
    d_{\text{NLI}}(\toreview{a^{(i)}, a^{(j)}}) = \begin{cases}
    0 & \text{if bidirectional entailment}. \\
    1 & \text{if contradiction in either direction}. \\
    0.5 & \text{otherwise (neutral).}
    \end{cases}
\end{equation}
\end{definition}

This function is symmetric, identity-preserving ($d_{\text{NLI}}(a,a) = 0$), and bounded in $[0,1]$. \toreview{Considering time-aligned samples at timestep $t$ and} substituting $d_{\text{NLI}}$ for the \toreview{immediate reward difference term}  in \Cref{eq:bisim}, we obtain the following recursive distance:
\begin{equation}
    U(\toreview{s_t^{(i)}, s_t^{(j)}}) = \E_{\toreview{a_t^{(i)}} \sim \pi(\cdot|s_t^{(i)}),\; \toreview{a_t^{(j)}} \sim \pi(\cdot|s_t^{(j)})}\big[d_{\text{NLI}}(\toreview{a_t^{(i)}, a_t^{(j)}})\big] + \gamma \cdot \E\big[U(\toreview{s_{t+1}^{(i)}, s_{t+1}^{(j)}})\big],
\label{eq:rsd_general}
\end{equation}
\toreview{where $s_{t+1}^{(i)} \sim P(\cdot | s_t^{(i)}, a_t^{(i)} )$ and $s_{t+1}^{(j)} \sim P(\cdot | s_t^{(j)}, a_t^{(j)} )$ are successor states.}


\subsection{Learned Approximation}
\label{sec:approximation}

Computing the exact fixed point of \Cref{eq:rsd_general} is intractable for two reasons. First, we treat the LLM as a black box and do not have access to the internal transition probabilities \toreview{$P(s_{t+1} \mid s_t, a_t)$}. Second, even with access, the state space (all possible \toreview{interaction} histories) is computationally large.

\toreview{To address this, we approximate the fixed point of \Cref{eq:rsd_general} using the MICo method (\Cref{sec:mico}) with temporal-difference (TD) learning using the rollouts from the LLM agent. 
Given paired trajectory rollouts $\tau^{(i)}$ and $\tau^{(j)}$ from the same task, we use time-aligned state-action pairs samples $(s_t^{(i)}, a_t^{(i)})$ and  $(s_t^{(j)}, a_t^{(j)})$ at timestep $t$.
The $n$-step TD target $y_t$ is constructed by unrolling both trajectories for the next $n$ steps:
\begin{equation}
    y_t= \sum_{k=0}^{n-1} \gamma^k d_{\text{NLI}}(a_{t+k}^{(i)}, a^{(j)}_{t+k}) + \gamma^n \hat{U}_{\bar{\theta}}(s_{t+n}^{(i)}, s^{(j)}_{t+n}) ,  
\label{eq:td_y}
\end{equation}
where $\bar{\theta}$ is a copy of the network parameters that is soft-updated at each step with $\theta$ and $\gamma$ is the discount factor. The target corresponds to an $n$-step unrolling of the Bellman equation, where discrepancies are accumulated and exponentially discounted with the time step.
The network is trained by minimizing the following Bellman residual:
\begin{equation}
    \mathcal{L}(\theta) = \E\left[\left(\hat{U}_\theta(s^{(i)}_t, s^{(j)}_t) - y_t\right)^2\right].
\label{eq:td_loss}
\end{equation}}

\subsection{Self-Distance as a Measure of Consistency}
\label{sec:self_distance}

The MICo distance function from \Cref{eq:mico_dist} produces what \citet{Castro2021MICo} term a \textit{diffuse} distance---the self-distance $U(s, s)$ can be non-zero. We use this property to measure how stochastic the process is from a given state. If the transition dynamics from state $s$ have many possible successor states, then $U(s, s) > 0$. For deterministic states, $U(s, s) = 0$. Intuitively, it captures how wide the tree of possibilities is when unrolling the process from that state.

For the self-distance $\hat{U}_\theta(s, s)$, the angular term vanishes (a vector has zero angle with itself), leaving:
\begin{equation}
    \hat{U}_\theta(s, s) = 2\alpha\|\phi_\theta(\toreview{z^s})\|^2.
\label{eq:self_distance}
\end{equation}

In our formulation, stochasticity arises from the LLM policy $\pi(\cdot \mid s)$, which may sample different actions across runs. Since we use NLI to measure the reward difference, the self-distance $U(s,s)$ captures the expected cumulative \textit{semantic} divergence between two independent trajectories originating from the same conversational state. It is zero when the agent always produces semantically equivalent actions (regardless of surface variation in phrasing), and positive when the agent would contradict itself across runs.

\begin{definition}[Recursive Semantic Divergence]
\label{def:rsd}
The Recursive Semantic Divergence at state $s$ is defined as \toreview{the infinite-horizon discounted sum of stepwise discrepancies obtained by expanding \Cref{eq:rsd_general}}:
\begin{equation}
    U(s, s) = \E_{\tau, \tau' \sim \Pi(\cdot|s)}\left[\sum_{t=0}^{\infty} \gamma^t \cdot d_{\text{NLI}}(a_t, a'_t)\right], 
\end{equation}
\toreview{where $\tau$ and $\tau'$ are independent trajectory rollouts from the distribution of trajectories $\Pi(\cdot | s )$, approximated in practice by $\hat{U}_\theta(s,s) = 2\alpha\|\phi_\theta(z_s)\|^2$.}
\end{definition}

RSD provides a notion of how consistent an LLM agent is. Consider a state $s$ from which the agent's policy $\pi(\cdot|s)$ places all probability mass on a single semantic equivalence class of actions (e.g., it always calls the same tool with the same arguments, possibly with different phrasings). In this case, $d_{\text{NLI}}(a, a') = 0$ for any two independent samples, and by the Bellman structure, if all reachable successor states also exhibit this concentration, then $U(s,s) = 0$. The agent is perfectly consistent from state $s$ onward.


\toreview{
The complete RSD training algorithm is detailed in \Cref{alg:rsd_training} in Appendix \ref{Algorithm for RSD training and inference}.
For inference, the computation of the consistency score requires only a single forward pass, with no additional rollouts, NLI evaluations, or sampling. The Appendix \ref{Algorithm for RSD training and inference} also details the algorithm for RSD inference in \Cref{alg:rsd_inference}.
}

\section{Theoretical Analysis}
\label{sec:theory}

Our formulation replaces the reward signal with NLI classifications and operates on compressed state embeddings rather than full conversation histories. We need to establish two things: first, that the recursive equation converges to a well-defined quantity, and second, that this quantity provides meaningful guarantees about the agent's actual behavior.

The Bellman operator in \Cref{eq:rsd_general} is a $\gamma$-contraction in supremum norm, so a unique fixed point $U^*$ exists with $U^* \in [0, 1/(1-\gamma)]$. Refer to \Cref{appendix:theory}

\subsection{Disagreement Probability}

In this section we interpret the $U(s, s)$ score and make conclusions about the probability that two independent runs of the agent disagree. We first show a bound on the total trajectory level disagreement and a bound on the existence of a dominant behavioral mode.

\begin{theorem}[Trajectory Agreement]\label{thm:agreement_main}
Let $D(\tau^{(i)}, \tau^{(j)}) = \sum_{t=0}^{\infty} \gamma^t d_{\text{NLI}}(a_t^{(i)}, a_t^{(j)})$ be the total discounted disagreement between two independent rollouts from state $s$. For any threshold $\delta > 0$:
\begin{equation}
    \Pr\big(D(\tau^{(i)}, \tau^{(j)}) \geq \delta\big) \leq \frac{U(s,s)}{\delta}.
\end{equation}
\end{theorem}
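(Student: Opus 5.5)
The plan is to apply Markov's inequality to the nonnegative random variable $D(\tau^{(i)}, \tau^{(j)})$. First, $D$ is well defined and nonnegative. By \Cref{def:nli}, each term $d_{\text{NLI}}(a_t^{(i)}, a_t^{(j)})$ lies in $[0,1]$. The series therefore has nonnegative terms and is dominated by $\sum_t \gamma^t = 1/(1-\gamma)$. Hence it converges surely, and $D$ is a bounded measurable function of the pair of rollouts, taking values in $[0, 1/(1-\gamma)]$.

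Second, I will identify $\E[D]$ with $U(s,s)$. By \Cref{def:rsd}, $U(s,s) = \E_{\tau,\tau'\sim\Pi(\cdot\mid s)}\big[\sum_t \gamma^t d_{\text{NLI}}(a_t,a'_t)\big]$, where $\tau,\tau'$ are independent rollouts from $s$. This is exactly $\E[D(\tau^{(i)},\tau^{(j)})]$.

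If one prefers to start from the recursive form \Cref{eq:rsd_general} and its fixed point $U^*$ (from the contraction result in \Cref{appendix:theory}), the identification needs one extra step. I would unroll the Bellman equation $n$ times along the independently coupled pair of trajectories. This gives the partial sum $\E\big[\sum_{t<n}\gamma^t d_{\text{NLI}}\big]$ plus a remainder $\gamma^n \E[U^*(s_n^{(i)},s_n^{(j)})]$. The remainder is at most $\gamma^n/(1-\gamma) \to 0$. Monotone convergence (or dominated convergence, using the bound $1/(1-\gamma)$) then lets me exchange the limit and the expectation.

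Third, I apply Markov's inequality: for $\delta>0$, $\Pr(D\ge\delta)\le \E[D]/\delta = U(s,s)/\delta$.

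The only genuine subtlety is the second step. The recursion must use the same independent coupling at every step, with the successor states drawn along each rollout's own trajectory, so that the unrolled expectation is the expectation over two independent full trajectories. Once that is in place, the limit exchange is routine.
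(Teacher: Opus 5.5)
Your proposal is correct and matches the paper's proof. Both apply Markov's inequality to the nonnegative variable $D(\tau^{(i)},\tau^{(j)})$ using $\E[D]=U(s,s)$, which the paper takes directly from \Cref{def:rsd}; your optional unrolling of the Bellman fixed point (remainder at most $\gamma^n/(1-\gamma)\to 0$, then monotone convergence) is a sound justification of an identification the paper leaves implicit.
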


\begin{theorem}[Dominant Action Class]\label{thm:dominant_main}
If $U(s,s) < 1/4$, there exists a semantic equivalence class $[C^*]$ such that:
\begin{equation}
    \Pr(\pi(s) \in [C^*]) \geq 1 - 2U(s,s).
\end{equation}
\end{theorem}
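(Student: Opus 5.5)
The statement to prove is \Cref{thm:dominant_main}. I would argue only from the first ($t=0$) term of the recursive decomposition in \Cref{def:rsd}. All terms $\gamma^t d_{\text{NLI}}(a_t,a'_t)$ are nonnegative, so
\[
U(s,s) \;\geq\; \E_{a,a'\sim\pi(\cdot|s)}\big[d_{\text{NLI}}(a,a')\big],
\]
with $a,a'$ drawn independently. Next I would partition $\mathcal{A}$ into semantic equivalence classes $\{[C_k]\}$ under bidirectional entailment, treating this relation as an equivalence relation, which the notion of ``semantic equivalence class'' already presupposes. Let $p_k=\Pr(\pi(s)\in[C_k])$. If $a$ and $a'$ fall in different classes, they are not bidirectionally entailing, so $d_{\text{NLI}}(a,a')\geq 1/2$ (either neutral or contradiction). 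If they fall in the same class, the distance is $0$. Hence
\[
\E\big[d_{\text{NLI}}(a,a')\big] \;\geq\; \tfrac12\Pr\big([a]\neq[a']\big) \;=\; \tfrac12\Big(1-\sum_k p_k^2\Big),
\]
which gives the collision bound $\sum_k p_k^2 \geq 1-2U(s,s)$.

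Let $p^*=\max_k p_k$ and take $[C^*]$ to be the class attaining it. Then $\sum_k p_k^2\leq p^*\sum_k p_k = p^*$, so
\[
p^* \;\geq\; \sum_k p_k^2 \;\geq\; 1-2U(s,s),
\]
which is exactly the claimed inequality. The hypothesis $U(s,s)<1/4$ then yields $p^*>1/2$. This makes $[C^*]$ a genuine strict majority class, so it is unique, and it justifies calling it ``dominant.'' If there are infinitely many classes, I would argue that the supremum of $p_k$ is attained because $p_k\to0$, or I would simply work with the maximizing class whenever some $p_k>0$. The bound $\sum_k p_k^2>0$ guarantees that some $p_k$ is positive.

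I expect the only delicate step to be the modeling assumption that bidirectional entailment defines an equivalence relation, in particular that it is transitive. I would state this explicitly as the convention already implicit in the theorem. If transitivity fails, I would instead define classes via a fixed NLI clustering, as in semantic entropy, and still obtain $d_{\text{NLI}}\geq 1/2$ across distinct clusters. The remaining steps are elementary.
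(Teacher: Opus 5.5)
Your proof is correct and follows the paper's argument essentially step for step: bound the expected immediate disagreement by $U(s,s)$ using nonnegativity of the tail terms, lower-bound it by $\tfrac12(1-\sum_k p_k^2)$ because $d_{\text{NLI}}\ge 1/2$ across classes, and conclude with $\max_k p_k \ge \sum_k p_k^2$. On your transitivity worry, the cleanest fix is to take connected components of the bidirectional-entailment relation, since the lower bound needs only $d_{\text{NLI}}\ge 0$ within a class and $d_{\text{NLI}}\ge 1/2$ across classes; this is safer than a greedy semantic-entropy clustering, where two actions in different clusters can still mutually entail each other.
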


Together, we can interpret the results as, if $U(s,s) = 0.1$, the probability of contradiction is at most 10\%, and at least 80\% of the policy's mass is concentrated on a single semantic action class.

\toreview{The proofs for \Cref{thm:agreement_main} and \Cref{thm:dominant_main}, and other theorems are detailed in Appendix \ref{appendix:theory}.}

\section{Experimental Setup}
\label{main_experimental_setup}

\toreview{We evaluate RSD on $\tau^2$-bench \citep{barres2025tau}, and ToolSandbox \citep{lu2024toolsandbox} to measure whether the learned metric captures trajectory-level consistency and if it can serve as a reward signal for improving consistency during fine-tuning.  For each benchmark, we run 8 independent trials across 8 test samples (64 trajectories) in total for Qwen-2.5 models (3B and 7B parameters) \citep{qwen2.5}, where each trajectory consists of interleaved user messages, agent actions (including tool calls), and tool outputs. We construct the training set by generating multiple rollouts using temperature set to 1. We compare RSD against the Qwen baseline and three other methods, i.e., token-level entropy \citep{malinin2020uncertainty}, semantic entropy \citep{kuhn2023semantic}, and supervised fine-tuning (SFT). For token entropy, semantic entropy, and RSD experiments, we fine-tune the base models using Group Relative Policy Optimization (GRPO) \citep{Shao2024DeepSeekMath}, using each metric as a reward signal. We minimize token and semantic entropy via negative entropy rewards, and minimize RSD self-distance $-\hat{U}_\theta(s,s)$. 
For the evaluation metric, we adopt the evaluation framework from \citet{chong2026talk} to evaluate the agent behaviour. We report two task performance metrics, area under the progress curve (AUC) which measures how quickly the agent makes progress, and the final progress rate. 
To measure behavioral consistency, we compute the coefficient of variation (CV = $\sigma / \mu$) of performance metrics across 8 independent trials per task, where a lower CV indicates more consistent behavior. To ensure a fair comparison across models, we restrict evaluation to the intersection of samples where all models within a size group achieve non-zero mean AUC. 
For inference, we set the temperature to 0.
For more details, refer to Appendix \ref{appendix_exp_setup}.}

We use LLMs to polish the paper, write portions of the training and inference code, and verify the proofs presented in the paper.

\section{Results and Discussion}
\label{Results and Discussion}

\subsection{Main Results}

\toreview{
\Cref{tab:main_results} summarizes the overall performance and consistency changes across two benchmarks and two base models. In most cases, we observe that RSD achieves a substantial reduction in CV. Smaller models show the largest reduction in CV compared to larger models. On $\tau^2$-bench with Qwen 2.5 3B, RSD reduces AUC CV by 30.9\% and progress CV by 26.1\% compared to the base model. On ToolSandbox 3B, we observe a 50.5\% reduction in AUC CV and a 49.3\% reduction in progress CV. These results suggest that smaller models exhibit higher intrinsic stochasticity in their decision-making processes. In contrast, 7B models exhibit lower baseline variance (e.g., 0.38 CV for ToolSandbox 7B compared to 0.90 for 3B), indicating more stable internal representations. For these larger models, the absolute gains from RSD are smaller because there is less variance to eliminate. 

We observe that token entropy and semantic entropy produce inconsistent results, with token entropy increasing CV in multiple cases. Minimizing token entropy encourages the model to make confident predictions at each decoding step. However, this token-level confidence does not translate to trajectory-level consistency. This occurs because token entropy measures local uncertainty at the vocabulary level without considering semantic meaning. Semantic entropy partially addresses this issue by clustering similar responses using NLI, but using this signal as a reward still produces inconsistent behavior on $\tau^2$-bench ($-15.9\%$ on 3B and $+0.5\%$ on 7B) and achieves only modest reductions on ToolSandbox ($-7.0\%$ on 3B and $-6.0\%$ on 7B). The fundamental limitation of both methods is that they operate myopically at the level of individual turns. Agents can therefore be locally consistent while remaining globally inconsistent. RSD directly addresses this issue through its trajectory-level formulation (\Cref{def:rsd}) and empirically improves consistency across benchmarks.

We compare RSD with SFT to investigate whether access to ground-truth labels and expert demonstrations leads to more consistent behavior. We observe that on $\tau^2$-bench, SFT reduces CV ($-28.0\%$ for 3B and $-6.5\%$ for 7B), but on ToolSandbox it increases CV ($+17.9\%$ for 7B).

Our training procedure estimates RSD from rollouts generated by a strong base agent that already makes non-trivial progress on the target benchmarks. As a result, the RSD network is trained only on trajectories sampled from a competent policy, and GRPO updates move the policy locally within this region of the behavior space. In particular, degenerate policies that always output a fixed response and achieve near-zero progress do not appear in the rollout distribution and are therefore not explicitly assigned low RSD. This helps explain why optimizing RSD in our experiments does not collapse to trivial “always fail consistently” solutions, even though such policies would theoretically achieve low RSD. Exploring from a randomly initialized or extremely weak policy could expose different failure modes, and avoiding trivial but consistent solutions in that regime would likely require combining RSD with correctness-aligned rewards.
}

\begin{table}[H]
\caption{Performance and consistency, measured by the coefficient of variation (CV), across different fine-tuning strategies. The upwards and downwards arrows $\uparrow \downarrow$ show relative change from the baseline. The blue downwards arrow \textcolor{blue}{$\downarrow$} indicates reduction in variation, while the red upwards arrow \textcolor{red}{$\uparrow$} indicates increased in variation. }
\label{tab:main_results}
\centering
\small
\begin{tabular}{llcccccc}
\toprule
& \textbf{Model} & \textbf{AUC} & \textbf{AUC CV} & \textbf{Progress} & \textbf{Progress CV} & \textbf{$pass$@$1$} & \textbf{$pass$\textasciicircum{}$3$} \\
\midrule
\multicolumn{8}{c}{\textit{ToolSandbox Qwen-2.5-3B}} \\
\midrule
& Base Qwen (baseline) & 0.271 & 0.90 & 0.293 & 0.86 & 14.1\% & 20.0\% \\

& Token Entropy & 0.302 & 0.50{\scriptsize\textcolor{blue}{$\downarrow$43.8\%}} & 0.317 & 0.53{\scriptsize\textcolor{blue}{$\downarrow$37.8\%}} & 15.6\% & 20.0\% \\

& Semantic Entropy & 0.326 & 0.83{\scriptsize\textcolor{blue}{$\downarrow$7.0\%}} & 0.354 & 0.82{\scriptsize\textcolor{blue}{$\downarrow$4.0\%}} & 17.2\% & 20.0\% \\

& SFT & 0.311 & 0.82{\scriptsize\textcolor{blue}{$\downarrow$8.9\%}} & 0.327 & 0.81{\scriptsize\textcolor{blue}{$\downarrow$5.6\%}} & 17.2\% & 20.0\% \\

& RSD & \textbf{0.340} & \textbf{0.44}{\scriptsize\textcolor{blue}{$\downarrow$50.5\%}} & 0.362 & \textbf{0.43}{\scriptsize\textcolor{blue}{$\downarrow$49.3\%}} & 18.8\% & 21.4\% \\
\midrule  
\multicolumn{8}{c}{\textit{ToolSandbox Qwen-2.5-7B}} \\
\midrule
& Base Qwen (baseline) & 0.539 & 0.38 & 0.557 & 0.36 & 29.7\% & 28.6\% \\

& Token Entropy & 0.510 & 0.50{\scriptsize\textcolor{red}{$\uparrow$32.4\%}} & 0.523 & 0.50{\scriptsize\textcolor{red}{$\uparrow$40.5\%}} & 21.9\% & 19.4\% \\

& Semantic Entropy & 0.504 & 0.36{\scriptsize\textcolor{blue}{$\downarrow$6.2\%}} & 0.512 & 0.34{\scriptsize\textcolor{blue}{$\downarrow$6.1\%}} & 23.4\% & 23.2\% \\

& SFT & 0.518 & 0.45{\scriptsize\textcolor{red}{$\uparrow$17.9\%}} & 0.530 & 0.43{\scriptsize\textcolor{red}{$\uparrow$20.6\%}} & 31.2\% & 28.8\% \\

& RSD & \textbf{0.555} & \textbf{0.28}{\scriptsize\textcolor{blue}{$\downarrow$26.8\%}} & 0.562 & \textbf{0.27}{\scriptsize\textcolor{blue}{$\downarrow$23.8\%}} & 25.0\% & 28.6\% \\  

\midrule
\multicolumn{8}{c}{\textit{$\tau^2$-bench Qwen-2.5-3B}} \\
\midrule

& Base Qwen (baseline) & 0.200 & 1.32 & 0.240 & 1.25 & 9.4\% & 0.3\% \\

& Token Entropy & 0.204 & 1.38{\scriptsize\textcolor{red}{$\uparrow$3.9\%}} & 0.247 & 1.32{\scriptsize\textcolor{red}{$\uparrow$6.1\%}} & 9.4\% & 0.3\% \\

& Semantic Entropy & 0.180 & 1.11{\scriptsize\textcolor{blue}{$\downarrow$15.9\%}} & 0.239 & 1.00{\scriptsize\textcolor{blue}{$\downarrow$20.0\%}} & 6.2\% & 1.0\%\\

& SFT & \textbf{0.289} & 0.95{\scriptsize\textcolor{blue}{$\downarrow$28.0\%}} & 0.353 & \textbf{0.90}{\scriptsize\textcolor{blue}{$\downarrow$27.8\%}} & 15.6\% & 5.4\% \\

& RSD & 0.274 & \textbf{0.91}{\scriptsize\textcolor{blue}{$\downarrow$30.9\%}} & 0.332 & 0.92{\scriptsize\textcolor{blue}{$\downarrow$26.1\%}} & 15.6\% & 3.6\% \\

\midrule
\multicolumn{8}{c}{\textit{$\tau^2$-bench Qwen-2.5-7B}} \\
\midrule
& Base Qwen (baseline) & 0.249 & 0.78 & 0.309 & 0.74 & 11.2\% & 0.6\% \\

& Token Entropy & 0.304 & 0.75{\scriptsize\textcolor{blue}{$\downarrow$3.7\%}} & 0.387 & \textbf{0.70}{\scriptsize\textcolor{blue}{$\downarrow$4.6\%}} & 15.6\% & 1.2\% \\

& Semantic Entropy & 0.242 & 0.78{\scriptsize\textcolor{red}{$\uparrow$0.5\%}} & 0.296 & 0.74{\scriptsize\textcolor{gray}{$\pm$0\%}} & 7.8\% & 0.0\%  \\

& SFT & \textbf{0.366} & \textbf{0.73}{\scriptsize\textcolor{blue}{$\downarrow$6.5\%}} & 0.462 & 0.72{\scriptsize\textcolor{blue}{$\downarrow$1.9\%}} & 25.0\% & 7.1\% \\

& RSD & 0.278 & 0.89{\scriptsize\textcolor{red}{$\uparrow$14.2\%}} & 0.349 & 0.85{\scriptsize\textcolor{red}{$\uparrow$15.1\%}} & 12.5\% & 3.0\% \\

\bottomrule
\end{tabular}
\end{table}

\subsection{Hybrid Training Ablation}
\toreview{
\Cref{tab:hybrid_ablation} investigates whether combining RSD with task-specific progress rewards can improve upon pure RSD. We combine the reward from RSD (weight $\lambda$) and progress delta at each step (weight $1-\lambda$) on ToolSandbox for Qwen-2.5-7b model. We observe a counterintuitive result where RSD ($\lambda = 1$) achieves both the lowest CV and the best performance. As we decrease $\lambda$ to incorporate progress rewards, both consistency and performance degrades monotonically. We suspect this is happening because of progress delta being a very sparse reward. Agent sometime makes the most amount of progress in the final turns which causes previous turns to receive near 0 rewards, causing the signal to be noisy.
}

\begin{table}[H]
\caption{Hybrid ablation on ToolSandbox Qwen-2.5-7B. The RSD weight $\lambda$ controls the balance between RSD consistency reward ($\lambda$) and progress delta reward ($1-\lambda$). $\uparrow \downarrow$ shows relative change from pure RSD ($\lambda=1.0$). The red upwards arrow \textcolor{red}{$\uparrow$} indicates increased in variation.}
\label{tab:hybrid_ablation}
\centering
\setlength{\tabcolsep}{2.5pt}
\small
\begin{tabular}{lccccccc}
\toprule
\textbf{Model} & \textbf{AUC} & \textbf{AUC CV} & \textbf{Progress} & \textbf{Prog CV} & \textbf{$pass$@$1$} & \textbf{$pass$\textasciicircum{}$3$} \\         
\midrule
RSD ($\lambda$=1.0) & \textbf{0.555} & \textbf{0.28} & 0.562 & \textbf{0.27} & 25.0\% & 28.6\%\\

Hybrid ($\lambda$=0.8) & 0.533 & 0.36{\scriptsize\textcolor{red}{$\uparrow$29.3\%}} & 0.541 & 0.33{\scriptsize\textcolor{red}{$\uparrow$21.1\%}} & 26.6\% & 28.6\% \\

Hybrid ($\lambda$=0.5) & 0.532 & 0.37{\scriptsize\textcolor{red}{$\uparrow$33.6\%}} & 0.539 & 0.35{\scriptsize\textcolor{red}{$\uparrow$28.2\%}} & 29.7\% & 21.4\% \\

Hybrid ($\lambda$=0.2) & 0.518 & 0.42{\scriptsize\textcolor{red}{$\uparrow$51.0\%}} & 0.536 & 0.35{\scriptsize\textcolor{red}{$\uparrow$27.0\%}} & 23.4\% & 21.4\% \\

\bottomrule
\end{tabular}
\end{table}

\subsection{Evaluating finetuned models using RSD}
\toreview{
To validate that finetuning with RSD as a reward signal reduces the actual inter-run semantic inconsistency captured by our metric, we evaluate the RSD score directly on the outputs of each finetuned model. For each test sample, the agent is run 8 times independently, and pairwise NLI (gpt-4.1) is used to classify whether outputs at each turn are functionally equivalent (entailment), contradictory, or neutral.

\Cref{tab:rsd_eval} reports the overall RSD scores across all methods. On ToolSandbox, RSD finetuning reduces the inter-run inconsistency by 52.8\% (from 0.672 to 0.317), the largest reduction among all methods. 
Token entropy and semantic entropy also reduce inconsistency ($-40.5\%$ and $-34.4\%$ respectively), confirming that all uncertainty-based rewards can improve behavioral consistency, though RSD is most effective because it directly optimizes the trajectory-level consistency objective. Notably, SFT shows negligible improvement ($-1.1\%$) on ToolSandbox despite memorizing successful trajectories, suggesting that supervised learning alone does not enforce output consistency.

On $\tau^2$-bench, the baseline RSD is substantially higher ($0.873$), reflecting the compounding nature of multi-turn airline service tasks where small early differences cascade into divergent trajectories. All methods show limited improvement, with SFT achieving the largest reduction ($-8.3\%$) by anchoring the model to fixed expert demonstrations. 

\begin{table}[H]
\caption{Direct RSD evaluation (lower = more consistent) across 8 independent trials per sample, measured via gpt-4.1.}
\label{tab:rsd_eval}
\centering
\small
\begin{tabular}{lcccc}
\toprule
& \multicolumn{2}{c}{\textit{ToolSandbox 7B}} & \multicolumn{2}{c}{\textit{$\tau^2$-bench 7B}} \\
\cmidrule(lr){2-3} \cmidrule(lr){4-5}
\textbf{Model} & \textbf{RSD} & \textbf{Reduction} & \textbf{RSD} & \textbf{Reduction} \\
\midrule
Base Qwen & 0.672 & — & 0.873 & — \\
Token Entropy & 0.400 & $-$40.5\% & 0.877 & +0.5\% \\
Semantic Entropy & 0.440 & $-$34.4\% & 0.860 & $-$1.5\% \\
SFT & 0.664 & $-$1.1\% & \textbf{0.800} & $-$8.3\% \\
Hybrid ($\lambda$=0.8) & 0.387 & $-$42.5\% & 0.867 & $-$0.7\% \\
RSD (ours) & \textbf{0.317} & $-$52.8\% & 0.875 & +0.2\% \\
\bottomrule
\end{tabular}
\end{table}

}

\section{Conclusion and Limitation}
\label{Conclusion and Limitation}
\toreview{We introduced Recursive Semantic Divergence (RSD), an unsupervised trajectory-level consistency metric for LLM agents that captures how behavioral inconsistencies propagate across future interaction turns. Unlike existing correctness-based or token-level approaches, RSD provides a recursive estimate of semantic divergence between independent executions from the same interaction-history state, and can additionally be used as a training signal to reinforce consistent agent behavior. Experiments across multiple benchmarks demonstrate that optimizing RSD reduces trajectory variability.

Nevertheless, our approach has several limitations. RSD may require retraining or adaptation when transferred to new domains or environments. Moreover, consistency does not necessarily imply correctness, and therefore the metric cannot distinguish consistently failing behaviors from consistently successful ones. Finally, the reliability of RSD depends heavily on the accuracy and calibration of the underlying NLI model used to estimate semantic disagreement.}


\bibliographystyle{plainnat}
\bibliography{references}

\newpage
\appendix

\section*{Appendix Contents}
\startcontents[appendix]
\begin{spacing}{1.3}
\printcontents[appendix]{}{2}{}
\end{spacing}
\newpage

\section{Appendix}

\subsection{Algorithm for RSD training and inference}
\label{Algorithm for RSD training and inference}
The following pseudocode details the algorithm for RSD training.

\begin{algorithm}[H]
\caption{RSD Training: Learning the Consistency Metric}
\label{alg:rsd_training}
\begin{algorithmic}[1]
\REQUIRE Target agent policy $\pi$, $M$ rollouts per task, NLI classifier, \toreview{pretrained} text encoder $\text{Enc}$
\REQUIRE Hyperparameters: $\gamma$, $\alpha$, $\beta$, $n$ (TD steps), learning rate $\eta$
\STATE \textbf{Data collection:} For each task, collect $M$ independent rollouts $\{\tau^{(1)}, \ldots, \tau^{(M)}\}$ from $\pi$
\STATE \textbf{NLI labeling:} For all trajectory pairs $(i,j)$ within the task, compute $d_{\text{NLI}}(a_t^{(i)}, a_t^{(j)})$ at each turn $t$
\STATE Initialize MICoNetwork parameters $\theta$, target network $\bar{\theta} \leftarrow \theta$
\FOR{training step $= 1, \ldots, N$}
    \STATE Sample a task and two trajectories $\tau^{(i)}, \tau^{(j)}$ for the same task
    \STATE Sample a turn $t$ within the trajectory
    \STATE \toreview{Compute embeddings: $z^{s^{(i)}_{t}} \leftarrow \text{Enc}(s_t^{(i)})$, \ $z^{s^{(j)}_{t}}\leftarrow \text{Enc}(s_t^{(j)})$}
    \STATE Compute $n$-step TD target: $\toreview{y_t} \leftarrow \sum_{k=0}^{n-1} \gamma^k d_{\text{NLI}}(a_{t+k}^{(i)}, a_{t+k}^{(j)}) + \gamma^n \hat{U}_{\bar{\theta}}(s_{t+n}^{(i)}, s_{t+n}^{(j)})$
    \STATE Update $\theta$ to minimize $(\hat{U}_\theta(s_t^{(i)}, s_t^{(j)}) - \toreview{y_t})^2$
    \STATE Soft update the target network
\ENDFOR
\RETURN Trained MICoNetwork $\phi_\theta$
\end{algorithmic}
\end{algorithm}

The following pseudocode details the algorithm for RSD inference.
\begin{algorithm}[H]
\caption{RSD Inference: Per-State Consistency Scoring}
\label{alg:rsd_inference}
\begin{algorithmic}[1]
\REQUIRE Trained MICoNetwork $\phi_\theta$, \toreview{pretrained} text encoder $\text{Enc}$, norm coefficient $\alpha$
\REQUIRE \toreview{Interaction history state} \toreview{$s$} 
\STATE \toreview{$z^{s} \leftarrow \text{Enc}(s)$}
\STATE \toreview{$\phi_s \leftarrow \phi_\theta(z^s)$}
\STATE $\hat{U}(s,s) \leftarrow 2\alpha \|\phi_s\|^2$
\RETURN $\hat{U}(s,s)$
\end{algorithmic}
\end{algorithm}

\subsection{Theoretical Analysis}
\label{appendix:theory}

We establish that the RSD fixed-point equation converges to a well-defined quantity with the NLI changes, and the quantity provides meaningful guarantees about agent behavior. Throughout, we let $\tau^{(i)}, \tau^{(j)} \sim \Pi(\cdot | s)$ denote two independent trajectory rollouts from state $s$, with $a_t^{(i)}, a_t^{(j)}$ the actions taken at step $t$ in each rollout.

\subsubsection{Convergence}
We first verify that the recursive equation (\Cref{eq:rsd_general}) has a unique solution. Recall the RSD fixed-point equation:
$$U(s_t^{(i)}, s_t^{(j)}) = \E_{a_t^{(i)} \sim \pi(\cdot|s_t^{(i)}),\; a_t^{(j)} \sim \pi(\cdot|s_t^{(j)})}\big[d_{\text{NLI}}(a_t^{(i)}, a_t^{(j)})\big] + \gamma \cdot \E\big[U(s_{t+1}^{(i)}, s_{t+1}^{(j)})\big].$$
Define the Bellman operator $\mathcal{T}$ acting on bounded functions $U: \mathcal{H} \times \mathcal{H} \to \R_{\geq 0}$ by:
$$(\mathcal{T}U)(s_t^{(i)}, s_t^{(j)}) = \E_{a_t^{(i)} \sim \pi(\cdot|s_t^{(i)}),\, a_t^{(j)} \sim \pi(\cdot|s_t^{(j)})}\big[d_{\text{NLI}}(a_t^{(i)}, a_t^{(j)})\big] + \gamma \cdot \E\big[U(s_{t+1}^{(i)}, s_{t+1}^{(j)})\big],$$
where $s_{t+1}^{(i)} \sim P(\cdot | s^{(i)}, a^{(i)})$ and $s_{t+1}^{(j)} \sim P(\cdot | s^{(j)}, a^{(j)})$ are the respective successor states. Since $d_{\text{NLI}} \in [0,1]$ is bounded and $\gamma < 1$, the operator $\mathcal{T}$ is a $\gamma$-contraction in supremum norm. 

\subsubsection{Disagreement Probability}

Given that $U^*$ is well-defined, we derive what the self-distance $U(s,s)$ implies about agent behaviour. We make use of the \Cref{def:rsd}, where self distance equals to the expected total discounted disagreement between two independent rollouts.

\begin{equation}
U(s,s) = \E_{\tau^{(i)}, \tau^{(j)} \sim \Pi(\cdot|s)}\left[\sum_{t=0}^{\infty} \gamma^t\, d_{\text{NLI}}(a_t^{(i)}, a_t^{(j)})\right] = \E[D(\tau^{(i)}, \tau^{(j)})],
\label{eq:u_equals_expectation}
\end{equation}
where $D(\tau^{(i)}, \tau^{(j)}) \coloneqq \sum_{t=0}^{\infty} \gamma^t\, d_{\text{NLI}}(a_t^{(i)}, a_t^{(j)}) \geq 0$ is the total discounted disagreement.

\setcounter{theorem}{0} 
\begin{theorem}[Trajectory Agreement]\label{thm:agreement_main}
Let $D(\tau^{(i)}, \tau^{(j)}) = \sum_{t=0}^{\infty} \gamma^t d_{\text{NLI}}(a_t^{(i)}, a_t^{(j)})$ be the total discounted disagreement between two independent rollouts from state $s$. For any threshold $\delta > 0$:
\begin{equation}
    \Pr\big(D(\tau^{(i)}, \tau^{(j)}) \geq \delta\big) \leq \frac{U(s,s)}{\delta}.
\end{equation}
\end{theorem}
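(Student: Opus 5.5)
The plan is to apply Markov's inequality directly to the nonnegative random variable $D(\tau^{(i)}, \tau^{(j)})$. The work lies in checking that $D$ is a well-defined, nonnegative, measurable random variable whose expectation equals $U(s,s)$. That identity is \Cref{eq:u_equals_expectation}, which follows from \Cref{def:rsd}.

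\textbf{Step 1: $D$ is well defined and bounded.} Each summand $\gamma^t d_{\text{NLI}}(a_t^{(i)}, a_t^{(j)})$ lies in $[0,\gamma^t]$, because $d_{\text{NLI}} \in [0,1]$ by \Cref{def:nli}. The partial sums are therefore nondecreasing and bounded by $1/(1-\gamma)$. So $D$ is a pointwise limit of measurable functions of the pair of trajectories, and it satisfies $0 \le D \le 1/(1-\gamma)$. In particular $D$ is integrable.

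\textbf{Step 2: $\E[D] = U(s,s)$.} This is where the one subtle point sits. \Cref{def:rsd} states the infinite-horizon expectation, while the convergence argument gives $U^*$ as the fixed point of the Bellman operator in \Cref{eq:rsd_general}. I would reconcile the two as follows. Unroll the fixed-point equation $N$ times from the pair $(s,s)$ along the independent coupling of the two rollouts. This gives
\[
U^*(s,s) = \E\Big[\sum_{t=0}^{N-1}\gamma^t d_{\text{NLI}}(a_t^{(i)},a_t^{(j)})\Big] + \gamma^N \E\big[U^*(s_N^{(i)},s_N^{(j)})\big].
\]
The remainder is bounded by $\gamma^N/(1-\gamma) \to 0$, since $U^* \in [0,1/(1-\gamma)]$. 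Monotone convergence (or dominated convergence, using Step 1) then lets us pass the limit inside the expectation, which yields $U^*(s,s) = \E[D]$. This is exactly \Cref{eq:u_equals_expectation}, so if that equation is taken as given, this step can be cited rather than reproved.

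\textbf{Step 3: Markov's inequality.} For $\delta>0$, since $D \ge 0$ we have $\delta\,\mathbf{1}\{D\ge\delta\} \le D$ pointwise. Taking expectations gives $\delta \Pr(D\ge\delta) \le \E[D] = U(s,s)$, and dividing by $\delta$ finishes the proof. The main obstacle is only the interchange of limit and expectation in Step 2, which the uniform bound from Step 1 settles.
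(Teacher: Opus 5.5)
Your proposal is correct and follows the paper's proof, which is exactly Markov's inequality applied to the nonnegative variable $D(\tau^{(i)},\tau^{(j)})$ using $\E[D] = U(s,s)$ from \Cref{eq:u_equals_expectation}. Your Steps 1 and 2 add rigor the paper omits: you show $D$ is bounded, and you identify the Bellman fixed point $U^*(s,s)$ with the infinite-horizon expectation by unrolling. The paper skips this by taking \Cref{def:rsd} as the definition, so the route is unchanged.
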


\begin{proof}
Since $D(\tau^{(i)}, \tau^{(j)}) \geq 0$ and $\E[D(\tau^{(i)}, \tau^{(j)})] = U(s,s)$ by \Cref{eq:u_equals_expectation}, Markov's inequality directly gives:
$$\Pr\big(D(\tau^{(i)}, \tau^{(j)}) \geq \delta\big) \leq \frac{\E[D(\tau^{(i)}, \tau^{(j)})]}{\delta} = \frac{U(s,s)}{\delta}.$$
\end{proof}

A useful instantiation is $\delta = \sqrt{U(s,s)}$, which gives $\Pr(D \geq \sqrt{U(s,s)}) \leq \sqrt{U(s,s)}$: both the disagreement threshold and the probability of exceeding it shrink as $U(s,s) \to 0$.

\begin{corollary}[Immediate Contradiction Bound]\label{cor:contradiction}
The probability that two independent actions from state $s$ contradict each other satisfies:
$$\Pr\big(\text{contradiction at step } 0\big) \leq U(s,s).$$
\end{corollary}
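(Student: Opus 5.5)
The plan is to lower-bound $U(s,s)$ by its $t=0$ term and then relate that term to the contradiction event through the definition of $d_{\text{NLI}}$. By \Cref{eq:u_equals_expectation},
\begin{equation*}
U(s,s) = \E\Big[\sum_{t=0}^{\infty} \gamma^t d_{\text{NLI}}(a_t^{(i)}, a_t^{(j)})\Big].
\end{equation*}
Every summand is nonnegative because $d_{\text{NLI}} \in [0,1]$ and $\gamma \geq 0$. Dropping all terms with $t \geq 1$ can therefore only decrease the value, which gives
\begin{equation*}
U(s,s) \geq \E\big[d_{\text{NLI}}(a_0^{(i)}, a_0^{(j)})\big].
\end{equation*}
Here $a_0^{(i)}, a_0^{(j)}$ are independent draws from $\pi(\cdot\mid s)$. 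Equivalently, this is the first term of the fixed-point equation \Cref{eq:rsd_general} evaluated at $s^{(i)}=s^{(j)}=s$, with the nonnegative $\gamma\E[U(\cdot,\cdot)]$ part discarded. That part is nonnegative because the contraction from the convergence subsection has its fixed point in $[0,1/(1-\gamma)]$.

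Next I relate this expectation to the contradiction event. Let $E$ be the event that $a_0^{(i)}$ and $a_0^{(j)}$ are classified as contradictory in either direction. By \Cref{def:nli}, $d_{\text{NLI}} = 1$ on $E$ and $d_{\text{NLI}} \geq 0$ elsewhere, so $d_{\text{NLI}}(a_0^{(i)},a_0^{(j)}) \geq \mathbf{1}_E$ pointwise. Taking expectations gives $\E[d_{\text{NLI}}] \geq \Pr(E)$, and chaining this with the previous inequality yields $\Pr(E) \leq U(s,s)$.

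An alternative route is to apply \Cref{thm:agreement_main} with $\delta = 1$. On $E$ we have $D \geq \gamma^0 \cdot 1 = 1$, so $\Pr(E) \leq \Pr(D \geq 1) \leq U(s,s)$. I would present this version, since it reuses the theorem directly, and mention the pointwise argument as the underlying reason it works.

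The only subtle point is justifying the interchange of expectation and infinite sum. This follows from Tonelli's theorem, since all terms are nonnegative, or from dominated convergence, since the sum is bounded by $1/(1-\gamma)$. It is also implicit in \Cref{eq:u_equals_expectation}, which the paper takes as given.
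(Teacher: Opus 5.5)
Your proposal is correct and is essentially the paper's proof. The paper splits $D$ into its $t=0$ term plus a nonnegative remainder to get $D_0(s)\le U(s,s)$, then writes $D_0(s)=0.5\,p_{\text{neutral}}+p_{\text{contra}}\ge p_{\text{contra}}$; that is the expectation form of your pointwise bound $d_{\text{NLI}}\ge \mathbf{1}_E$. Your $\delta=1$ route through the Trajectory Agreement theorem is also valid, but your first version has one advantage, which the paper uses: it records the intermediate bound $D_0(s)\le U(s,s)$, and that bound is reused in the proof of the Dominant Action Class theorem.
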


\begin{proof}
We first bounding the expected immediate disagreement by $U(s,s)$, then bounding the contradiction probability by the expected immediate disagreement.

Define $D_0(s) \coloneqq \E_{a_0^{(i)}, a_0^{(j)} \sim \pi(\cdot|s)}[d_{\text{NLI}}(a_0^{(i)}, a_0^{(j)})]$ as the expected immediate disagreement. Expanding $D(\tau^{(i)}, \tau^{(j)})$ into its first term and the remainder:
$$D(\tau^{(i)}, \tau^{(j)}) = d_{\text{NLI}}(a_0^{(i)}, a_0^{(j)}) + \gamma \sum_{t=1}^{\infty} \gamma^{t-1}\, d_{\text{NLI}}(a_t^{(i)}, a_t^{(j)}).$$
Since $d_{\text{NLI}} \geq 0$, every term in the remainder is non-negative. Taking expectations of both sides:
$$U(s,s) = \E[D(\tau^{(i)}, \tau^{(j)})] = D_0(s) + \underbrace{\gamma \E\left[\sum_{t=1}^{\infty} \gamma^{t-1}\, d_{\text{NLI}}(a_t^{(i)}, a_t^{(j)})\right]}_{\geq\, 0}.$$
Therefore $D_0(s) \leq U(s,s)$.

Then, let $p_{\text{entail}}$, $p_{\text{neutral}}$, and $p_{\text{contra}}$ denote the probabilities that two independent samples $a_0^{(i)}, a_0^{(j)} \sim \pi(\cdot|s)$ yield bidirectional entailment, neutrality, or contradiction respectively, with $p_{\text{entail}} + p_{\text{neutral}} + p_{\text{contra}} = 1$. By \Cref{def:nli}, $d_{\text{NLI}}$ takes values $0$, $0.5$, and $1$ for these three cases.
$$D_0(s) = 0 \cdot p_{\text{entail}} + 0.5 \cdot p_{\text{neutral}} + 1 \cdot p_{\text{contra}} = 0.5\, p_{\text{neutral}} + p_{\text{contra}}.$$
Since $0.5\, p_{\text{neutral}} \geq 0$, we have $D_0(s) \geq p_{\text{contra}}$.

This results in: 
$$\Pr(\text{contradiction at step } 0) = p_{\text{contra}} \leq D_0(s) \leq U(s,s).$$
\end{proof}

\begin{theorem}[Dominant Action Class]\label{thm:dominant}
If $U(s,s) < 1/4$, there exists a semantic equivalence class $C^*$ such that:
\begin{equation}
    \Pr(\pi(s) \in C^*) \geq 1 - 2U(s,s).
\end{equation}
Here $C^* = C_{k^*}$ where $k^* = \arg\max_k \Pr(\pi(s) \in C_k)$ is the most probable equivalence class under the policy.
\end{theorem}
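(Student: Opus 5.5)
The plan is to reduce the statement to a bound on the expected immediate disagreement $D_0(s)$ and then convert that bound into a concentration statement about the policy's distribution over semantic classes.

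\textbf{Step 1: partition the actions.} I would partition the action space $\mathcal{A}$ into semantic equivalence classes $\{C_k\}$, defined by bidirectional entailment, and write $p_k = \Pr(\pi(s) \in C_k)$, so that $\sum_k p_k = 1$. This requires the standing modelling assumption that bidirectional entailment under the NLI classifier is an equivalence relation, in particular that it is transitive. I would also use that, by \Cref{def:nli}, any pair of actions from \emph{different} classes is labelled neutral or contradictory, so $d_{\text{NLI}} \geq 1/2$ on such a pair. Pairs from the same class have $d_{\text{NLI}} = 0$.

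\textbf{Step 2: lower-bound $D_0(s)$.} Draw $a_0^{(i)}, a_0^{(j)}$ independently from $\pi(\cdot\mid s)$. By independence, the probability that they fall in the same class is $\sum_k p_k^2$. Hence
\begin{equation*}
D_0(s) = \E\big[d_{\text{NLI}}(a_0^{(i)}, a_0^{(j)})\big] \geq \tfrac{1}{2}\Big(1 - \sum_k p_k^2\Big).
\end{equation*}

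\textbf{Step 3: upper-bound $D_0(s)$.} I would reuse the first half of the proof of \Cref{cor:contradiction}. Splitting $D(\tau^{(i)},\tau^{(j)})$ into its $t=0$ term and a nonnegative remainder gives $D_0(s) \leq U(s,s)$. Combined with Step 2, this yields
\begin{equation*}
\sum_k p_k^2 \geq 1 - 2U(s,s).
\end{equation*}

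\textbf{Step 4: pass to the largest class.} Let $k^* = \arg\max_k p_k$ and $C^* = C_{k^*}$. Since $\sum_k p_k^2 \leq \max_k p_k \cdot \sum_k p_k = p_{k^*}$, we get $p_{k^*} \geq 1 - 2U(s,s)$, which is the claim.

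\textbf{Role of the hypothesis.} The bound in Step 4 holds for any value of $U(s,s)$. The hypothesis $U(s,s) < 1/4$ is what makes the conclusion meaningful: it gives $p_{k^*} > 1/2$. So $C^*$ holds a strict majority of the mass, which makes it the unique dominant class and the $\arg\max$ well-defined, with no ties. I would add this remark at the end of the proof.

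\textbf{Main obstacle.} I expect the main difficulty to be Step 1, not the algebra. Making the partition rigorous requires that NLI equivalence be transitive and that every cross-class pair incur cost at least $1/2$. Real NLI classifiers need not satisfy either property. I would state these as explicit assumptions on $d_{\text{NLI}}$, namely an idealized classifier. If needed, I would fall back on defining the classes as the connected components of the entailment relation, provided the cross-class cost bound still holds.
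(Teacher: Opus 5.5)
Your proposal is correct and follows the paper's proof essentially step for step. It uses the same partition into bidirectional-entailment classes, the same collision-probability bound $D_0(s) \geq \tfrac12\bigl(1-\sum_k p_k^2\bigr)$, the same appeal to $D_0(s)\le U(s,s)$ from the corollary, and the same final step $\max_k p_k \geq \sum_k p_k^2$, which you justify and the paper merely asserts. Your connected-components fallback needs no extra proviso: $d_{\text{NLI}}=0$ exactly on bidirectionally entailing pairs, so every cross-component pair automatically has $d_{\text{NLI}}\geq 1/2$, which removes the transitivity assumption that the paper makes tacitly.
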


\begin{proof}
Partition the action space into semantic equivalence classes $\{C_k\}_{k=1}^K$ defined by bidirectional NLI entailment: two actions $a^{(i)}, a^{(j)}$ belong to the same class $C_k$ if and only if $d_{\text{NLI}}(a^{(i)}, a^{(j)}) = 0$ (i.e., they mutually entail each other). Let $p_k = \Pr(\pi(s) \in C_k)$ with $\sum_k p_k = 1$.

We first lower bound on collision probability. For two independent samples $a_0^{(i)}, a_0^{(j)} \sim \pi(\cdot|s)$, the probability they fall in the same class is:
$$\Pr(a_0^{(i)} \sim a_0^{(j)}) = \sum_{k=1}^K p_k^2.$$
When the samples fall in different classes, they are either neutral or contradictory. By \Cref{def:nli}, $d_{\text{NLI}}(a_0^{(i)}, a_0^{(j)}) \geq 0.5$ in both cases. Therefore, we can lower-bound the expected immediate disagreement:


\begin{equation}
\label{eqn:lower_bound_d0}
\begin{aligned}
D_0(s)
&= \E[d_{\text{NLI}}(a_0^{(i)}, a_0^{(j)})] \\
&= \underbrace{\E[d_{\text{NLI}} \mid a_0^{(i)} \sim a_0^{(j)}]}_{=\,0}
   \Pr(a_0^{(i)} \sim a_0^{(j)}) \\
&\quad + \E[d_{\text{NLI}} \mid a_0^{(i)} \not\sim a_0^{(j)}]
   \Pr(a_0^{(i)} \not\sim a_0^{(j)}) \\
&\geq 0.5\,\Pr(a_0^{(i)} \not\sim a_0^{(j)}) \\
&= 0.5\left(1-\sum_{k=1}^{K}p_k^2\right).
\end{aligned}
\end{equation}

By \Cref{cor:contradiction}, $D_0(s) \leq U(s,s)$. Substituting into the inequality from \Cref{eqn:lower_bound_d0}:
$$0.5\left(1 - \sum_{k=1}^K p_k^2\right) \leq U(s,s).$$
Rearranging:
\begin{equation}
\label{eqn:step2_dominant_class}
    \sum_{k=1}^K p_k^2 \geq 1 - 2U(s,s).
\end{equation}

Now, we show the probability of the dominant class. For any probability distribution $(p_1, \ldots, p_K)$ with $\sum_k p_k = 1$, the following holds:
$$\max_k p_k \geq \sum_{k=1}^K p_k^2.$$

Combining with \Cref{eqn:step2_dominant_class}:
$$\max_k p_k \geq \sum_k p_k^2 \geq 1 - 2U(s,s).$$

The class $C^* = C_{k^*}$ achieving this maximum satisfies $\Pr(\pi(s) \in C^*) \geq 1 - 2U(s,s)$. The condition $U(s,s) < 1/4$ ensures this bound exceeds $1/2$, meaning $C^*$ contains a strict majority of the policy's probability mass.
\end{proof}
\subsubsection{Approximation Error}

The guarantees above hold for the true fixed point $U^*$. In practice, the learned approximation $\hat{U}_\theta$ differs from $U^*$ due to two sources of error: (1) the state embedding $\phi$ loses information relative to the full interaction history, and (2) the NLI classifier $\hat{d}_{\text{NLI}}$ is imperfect. We quantify how these per-step errors propagate through the recursive structure.

Let $\epsilon_\phi$ denote the worst-case per-step error from embedding compression:
$$\epsilon_\phi = \sup_{s^{(i)}, s^{(j)}} \left| \E[d_{\text{NLI}}(a^{(i)}, a^{(j)}) \mid s^{(i)}, s^{(j)}] - \E[d_{\text{NLI}}(a^{(i)}, a^{(j)}) \mid \phi(s^{(i)}), \phi(s^{(j)})] \right|$$
and let $\epsilon_{\text{NLI}}$ denote the worst-case NLI classification error:
$$\epsilon_{\text{NLI}} = \sup_{a^{(i)}, a^{(j)}} \left| \hat{d}_{\text{NLI}}(a^{(i)}, a^{(j)}) - d^*_{\text{NLI}}(a^{(i)}, a^{(j)}) \right|$$

\begin{theorem}[Error Propagation]\label{thm:error}
The approximation error satisfies:
\begin{equation}
    \left|\hat{U}_\theta(s, s) - U^*(s, s)\right| \leq \frac{\epsilon_\phi + \epsilon_{\text{NLI}}}{1 - \gamma}
\end{equation}
\end{theorem}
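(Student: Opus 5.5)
The plan is to treat $\hat{U}_\theta$ as the (idealized) fixed point of a perturbed Bellman operator and compare it with the fixed point $U^*$ of the exact operator $\mathcal{T}$ from the Convergence subsection. Concretely, I will define an approximate operator $\hat{\mathcal{T}}$ acting on bounded functions of state pairs by
\[
(\hat{\mathcal{T}}U)(s^{(i)}, s^{(j)}) = \E\big[\hat{d}_{\text{NLI}}(a^{(i)}, a^{(j)}) \mid \phi(s^{(i)}), \phi(s^{(j)})\big] + \gamma\, \E\big[U(s'^{(i)}, s'^{(j)})\big].
\]
Here the immediate-cost term is computed with the imperfect classifier and conditioned only on the compressed embeddings, and the continuation term is kept identical to that of $\mathcal{T}$. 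I will assume, as the statement implicitly does, that $\hat{U}_\theta$ is a fixed point of $\hat{\mathcal{T}}$; that is, TD training has converged and optimization and function-class error are ignored. Since $\hat{d}_{\text{NLI}}$ is bounded, the same argument as for $\mathcal{T}$ shows that $\hat{\mathcal{T}}$ is a $\gamma$-contraction, so this fixed point is well defined.

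The key step is a uniform one-step bound $\|\mathcal{T}U - \hat{\mathcal{T}}U\|_\infty \le \epsilon_\phi + \epsilon_{\text{NLI}}$, valid for every bounded $U$. The continuation terms cancel, leaving only the difference of immediate costs, which I split with the triangle inequality into two pieces. The first piece is
\[
\big|\E[d^*_{\text{NLI}} \mid s^{(i)}, s^{(j)}] - \E[d^*_{\text{NLI}} \mid \phi(s^{(i)}), \phi(s^{(j)})]\big| \le \epsilon_\phi,
\]
which holds directly by the definition of $\epsilon_\phi$. The second piece is
\[
\big|\E[d^*_{\text{NLI}} - \hat{d}_{\text{NLI}} \mid \phi(s^{(i)}), \phi(s^{(j)})]\big| \le \E\big[|d^*_{\text{NLI}} - \hat{d}_{\text{NLI}}|\big] \le \epsilon_{\text{NLI}},
\]
which follows from the pointwise supremum in the definition of $\epsilon_{\text{NLI}}$ and Jensen's inequality.

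With this bound in hand, the conclusion follows from the standard perturbation argument for contractions:
\[
\|\hat{U}_\theta - U^*\|_\infty = \|\hat{\mathcal{T}}\hat{U}_\theta - \mathcal{T}U^*\|_\infty \le \|\hat{\mathcal{T}}\hat{U}_\theta - \mathcal{T}\hat{U}_\theta\|_\infty + \|\mathcal{T}\hat{U}_\theta - \mathcal{T}U^*\|_\infty \le (\epsilon_\phi + \epsilon_{\text{NLI}}) + \gamma\|\hat{U}_\theta - U^*\|_\infty.
\]
Rearranging gives $\|\hat{U}_\theta - U^*\|_\infty \le (\epsilon_\phi + \epsilon_{\text{NLI}})/(1-\gamma)$, and specializing to the diagonal pair $(s,s)$ yields the stated inequality. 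As a cross-check, I would also unroll Definition~\ref{def:rsd} term by term: a per-step error of $\epsilon_\phi + \epsilon_{\text{NLI}}$ summed against $\sum_t \gamma^t$ gives the same bound.

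The main obstacle is making precise what $\hat{U}_\theta$ is a fixed point of, and in particular how the embedding error enters the continuation term. If the successor distribution in $\hat{\mathcal{T}}$ is also taken conditional on $\phi$, then the two operators differ in their transition terms as well. I would handle this by defining $\epsilon_\phi$ as the error in the full one-step backup rather than only the immediate term, or else by noting that the continuation uses the same true rollouts in both operators, so only the cost term is perturbed. I would adopt the second reading and state explicitly that the bound excludes approximation error from finite TD training.
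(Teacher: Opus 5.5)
Your proposal is correct and follows essentially the paper's proof. It uses the same perturbed operator with a shared continuation term and the same uniform one-step bound $\|\mathcal{T}U - \tilde{\mathcal{T}}U\|_\infty \le \epsilon_\phi + \epsilon_{\text{NLI}}$, followed by the standard contraction-perturbation argument: the paper telescopes $\tilde{\mathcal{T}}^n U^*$ using contractivity of $\tilde{\mathcal{T}}$, while you rearrange $\|\hat{U}_\theta - U^*\|_\infty \le (\epsilon_\phi + \epsilon_{\text{NLI}}) + \gamma\|\hat{U}_\theta - U^*\|_\infty$ using contractivity of $\mathcal{T}$, which is equivalent. Your explicit triangle-inequality split of the cost term and your stated assumption that $\hat{U}_\theta$ is exactly the perturbed fixed point spell out what the paper leaves implicit in ``$\hat{U}_\theta$ is trained to approximate $\tilde{U}^*$.''
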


\begin{proof}
Let $\tilde{\mathcal{T}}$ denote the approximate Bellman operator that uses $\hat{d}_{\text{NLI}}$ in place of $d^*_{\text{NLI}}$ and conditions on embeddings $\phi(s)$ rather than full states. Since each modification introduces at most $\epsilon_{\text{NLI}}$ and $\epsilon_\phi$ error respectively in the per-step cost, while the discounted recursive term is shared, we have for any function $U$ and any state pair $(s^{(i)}, s^{(j)})$:
$$|(\mathcal{T}U)(s^{(i)}, s^{(j)}) - (\tilde{\mathcal{T}}U)(s^{(i)}, s^{(j)})| \leq \epsilon_\phi + \epsilon_{\text{NLI}}.$$
Both operators are $\gamma$-contractions (since only the per-step cost differs, not the discount structure). Let $U^*$ and $\tilde{U}^*$ be their respective fixed points. Evaluating at $U^*$:
$$\tilde{\mathcal{T}}U^* = \tilde{\mathcal{T}}U^* - \mathcal{T}U^* + U^*,$$
so $\|\tilde{\mathcal{T}}U^* - U^*\|_\infty \leq \epsilon_\phi + \epsilon_{\text{NLI}}$. Applying $\tilde{\mathcal{T}}$ repeatedly and using contractivity:
$$\|\tilde{U}^* - U^*\|_\infty = \lim_{n\to\infty}\|\tilde{\mathcal{T}}^n U^* - U^*\|_\infty \leq \sum_{k=0}^{\infty} \gamma^k (\epsilon_\phi + \epsilon_{\text{NLI}}) = \frac{\epsilon_\phi + \epsilon_{\text{NLI}}}{1-\gamma}.$$
Since $\hat{U}_\theta$ is trained to approximate $\tilde{U}^*$, the bound follows.
\end{proof}

For $\gamma = 0.8$, the amplification factor is $1/(1-\gamma) = 5$, meaning per-step errors of $0.02$ each lead to at most $0.2$ total error. In practice, we calibrate decision thresholds empirically on held-out rollouts rather than relying on worst-case values of $\epsilon_\phi$ and $\epsilon_{\text{NLI}}$, since these depend on the specific encoder and NLI model used.

\subsection{Experimental Setup}
\label{appendix_exp_setup}
\toreview{
We evaluate RSD on $\tau^2$-bench \citep{barres2025tau}, and ToolSandbox \citep{lu2024toolsandbox} to measure whether the learned metric captures trajectory-level consistency and if it can serve as a reward signal for improving consistency during fine-tuning. $\tau^2$-bench contains scenarios related to airline customer service agent, while ToolSandbox comprises general-purpose tasks spanning diverse domains, including calendar management and web search. For each benchmark, we run 8 independent trials across 8 test samples (64 trajectories) in total for Qwen-2.5 models (3B and 7B parameters) \citep{qwen2.5}, where each trajectory consists of interleaved user messages, agent actions (including tool calls), and tool outputs.
}

\subsubsection{RSD Training}

\toreview{
We construct the training set by generating multiple rollouts using temperature set to 1. For $\tau^2$-bench we collect $M=30$ independent rollouts from $13$ task instances, and for ToolSandbox we collect $M=8$ independent rollouts from $29$ task instances. From the resulting trajectories, we extract state-action transition as described in \Cref{sec:pomdp}. Each state $s_t$ consists of the full interaction history up to timestep $t$, i.e,  $(o_0, a_0, o_1, a_1, \ldots, o_{t-1}, a_{t-1}, o_t)$, where observations include user messages and tool outputs, and actions include agent responses and tool calls.

We adopt the MICo parameterization from \Cref{eq:mico_dist}, where $z^s$ is computed using a frozen Cohere embed-v4-0 and mapped through a 3-layer MLP into a 256-dimensional latent space. The network is trained to satisfy \Cref{eq:rsd_general} using $n=3$ TD steps and $\gamma=0.8$. To compute pairwise NLI relations between actions, we use gpt-4.1 to label each pair as entailment, contraction or neutral, which are then mapped to scores as defined in \Cref{def:nli}. The trained models are then used as reward signals to finetune different models. Refer to \Cref{alg:rsd_training} for more detail on the training algorithm. 
}

\subsubsection{Baselines}
\label{appendix:baselines}

\toreview{
We compare RSD against three classes of methods:

\textbf{Token-level entropy} \citep{malinin2020uncertainty}: We compute the average token-level entropy across all tokens in the agent's response at each turn. This provides a measure of uncertainty at the token generation level but does not capture semantic consistency across multiple turns.

\textbf{Semantic entropy} \citep{kuhn2023semantic}: We cluster agent responses using bidirectional NLI entailment to identify semantic equivalence classes, then compute the entropy over these clusters. Unlike token entropy, this captures semantic variation, but only at single-response granularity without considering future trajectory divergence.

\textbf{Supervised fine-tuning (SFT)}: We fine-tune the base Qwen-2.5 models on expert trajectories from both benchmarks using next-token prediction. This provides a baseline that improves task success through supervised learning but does not explicitly optimize for consistency.

For token entropy, semantic entropy, and RSD, we fine-tune the base models using Group Relative Policy Optimization (GRPO) \citep{Shao2024DeepSeekMath}, using each metric as a reward signal. We minimize token and semantic entropy via negative entropy rewards, and minimize RSD self-distance $-\hat{U}_\theta(s,s)$. All methods use the same base models (Qwen-2.5-3B and Qwen-2.5-7B), share GRPO hyperparameters, and are evaluated under identical trajectory collection settings with 8 trials per task for a fair comparison.
}

\subsubsection{Evaluation Metrics}
\label{appendix:eval_metrics}

\toreview{
We adopt the evaluation framework from \citet{chong2026talk} to evaluate the agent behaviour, as it captures both intermediate progress and final outcomes. We report two task performance metrics, area under the progress curve (AUC) which measures how quickly the agent makes progress, and the final progress rate. 

To measure behavioral consistency, we compute the coefficient of variation (CV = $\sigma / \mu$) of performance metrics across 8 independent trials per task, where a lower CV indicates more consistent behavior. To ensure a fair comparison across models, we restrict evaluation to the intersection of samples where all models within a size group achieve non-zero mean AUC. This avoids undefined CV where $\mu = 0$, and also prevents models outputting trivial solutions from appearing more consistent. This guarantees that every model is evaluated on identical samples where it demonstrates at least partial task engagement. Additionally, we report pass@k and pass\textasciicircum{}$k$  metrics \citep{yao2024tau,barres2025tau}. For inference, we set the temperature to 0.
}


\end{document}